\newtheorem{theorem} {Theorem}
\newtheorem{lemma} {Lemma}
\newtheorem{definition} {Definition}
\newtheorem{assumption} {Assumption}
\def\a{{\mathbf{a}}}
\def\c{{\mathbf{c}}}
\def\x{{\mathbf{x}}}
\def\g{{\mathbf{g}}}
\def\u{{\mathbf{u}}}
\def\v{{\mathbf{v}}}
\def\w{{\mathbf{w}}}
\def\y{{\mathbf{y}}}
\def\q{{\mathbf{q}}}
\def\b{{\mathbf{b}}}
\def\C{{\mathbf{C}}}
\def\A{{\mathbf{A}}}
\def\M{{\mathbf{M}}}
\def\N{{\mathbf{N}}}
\def\I{{\mathbf{I}}}
\def\C{{\mathbf{C}}}
\def\Q{{\mathbf{Q}}}
\DeclareMathOperator*{\argmin}{arg\,min}
\newcommand{\mP}{\mathcal{P}}
\newcommand{\mK}{\mathcal{K}}
\newcommand{\mX}{\mathcal{X}}
\newcommand{\mD}{\mathcal{D}}
\newcommand{\E}{\mathbb{E}}
\newcommand{\regret}{\textrm{regret}}
\newcommand{\dist}{\textrm{dist}}
\newcommand{\reals}{\mathbb{R}}
\begin{document}

%

%

\twocolumn[

\aistatstitle{Logarithmic Regret for Online Gradient Descent Beyond Strong Convexity}

\aistatsauthor{Dan Garber}

\aistatsaddress{Technion - Israel Institute of Technology}

]

\begin{abstract}
Hoffman's classical result gives a bound on the distance of a point from a convex and compact polytope in terms of the magnitude of violation of the constraints. Recently, several results showed that Hoffman's bound can be used to derive strongly-convex-like rates for first-order methods for \textit{offline} convex optimization of curved, though not strongly convex, functions, over polyhedral sets.
In this work, we use this classical result for the first time to obtain faster rates for \textit{online convex optimization} over polyhedral sets with curved convex, though not strongly convex, loss functions. We show that under several reasonable assumptions on the data, the standard \textit{Online Gradient Descent} algorithm guarantees logarithmic regret. To the best of our knowledge, the only previous algorithm to achieve logarithmic regret in the considered settings is the \textit{Online Newton Step} algorithm which requires quadratic (in the dimension) memory and at least quadratic runtime per iteration, which greatly limits its applicability to large-scale problems. In particular, our results hold for  \textit{semi-adversarial} settings in which the data is a combination of  an arbitrary  (adversarial) sequence and a stochastic sequence, which might provide reasonable approximation for many real-world sequences, or under a natural assumption that the data is low-rank. We demonstrate via experiments that the regret of OGD is indeed comparable to that of ONS (and even far better)  on curved though not strongly-convex losses.
\end{abstract}

\section{Introduction}
The celebrated \textit{Online Gradient Descent} algorithm (OGD), originally due to \cite{Zinkevich03}, is a natural adaptation of the classical projected (sub)gradient descent algorithm for \textit{offline} convex optimization, to the setting of \textit{Online Convex Optimization} \cite{Hazan16,SSS12}. The benefits of OGD are two folded: (i) in many problems of interest it performs very efficient iterations, which can often be executed in linear time (in the dimension), and (ii) it often guarantees optimal regret rates, mainly in terms of the length of the sequence $T$, e.g., $\sqrt{T}$ regret for arbitrary convex loss functions, and $\log{T}$ regret in case all loss functions are strongly-convex \cite{AHK07}. 

However, there exists a highly-important and wide class of loss functions, known as \textit{exp-concave losses} \cite{AHK07}, for which OGD does not guarantee optimal regret (in terms of $T$). For instance, the family of exp-concave losses capture important problems such as online linear regression with the square loss and online LASSO, online logistic regression, online portfolio selection, and more.
While for exp-concave losses, OGD only guarantees regret that scales like $\sqrt{T}$, it is known that an online algorithm known as \textit{Online Newton Step} (ONS), originally due to \cite{AHK07}, guarantees $\log{T}$ regret (see also recent work \cite{Luo6} which gives an improved variant in terms of runtime and regret bound for low-rank data). On the downside, while OGD applies very efficient iterations in terms of runtime and memory requirements (e.g., when computing the gradient vector of the loss function and projecting onto the feasible set is computationally-cheap), ONS requires quadratic memory and to solve a linear system on each iteration (which requires at least quadratic runtime via efficient implementation). ONS also requires to compute a non-Euclidean projection on each iteration to enforce the constraints, which can be considerably more expensive than the Euclidean projection required by OGD (e.g., might require to use an iterative algorithm). Thus, despite the improved regret bound, ONS is often not applicable to large-scale problems.  This naturally motivates the following question:
\begin{center}
\textit{Can Online Gradient Descent be shown to enjoy a logarithmic regret bound for classes of loss functions beyond the class of strongly convex losses?}
\end{center}


In this paper we take a step forward towards understanding the conditions under-which OGD can guarantee logarithmic regret, hence yielding both an efficient algorithm and an improved convergence rate in such settings.
In particular, we focus on an important sub-class of the exp-concave losses: loss functions which can be written as a strongly-convex function applied to a linear transformation of the input variables. Such loss functions include important examples such as the square loss for linear regression, the online portfolio optimization loss, the logistic regression loss, and more. While such losses are not necessarily strongly convex in the entire space, they are strongly convex on a certain subspace, which corresponds to the row-span of the linear transformation.
Our main result shows that when all loss functions are of this form, with linear transformations that satisfy certain consistency conditions, and the feasible set is a convex and compact polytope, the vanilla OGD algorithm, with a suitable choice of learning rate, indeed guarantees logarithmic regret.
To the best of our knowledge, this is the first result to establish strongly-convex-like rates for OGD without strong convexity, on an important and wide class of applications.


Technically, at the heart of our result lies a classical result in convex analysis, originally due to Hoffman \cite{Hoffman52}, which roughly speaking, bounds the distance of a point from a convex and compact polytope in terms of the magnitude of violation of the constraints describing the polytope. In case the feasible set is a polytope and the loss function is as described above (i.e., strongly convex applied to linear transformation), it can be shown that Hoffman's bound implies a property known as \textit{quadratic growth}, which upper bounds the $\ell_2$ distance between any feasible point and a feasible optimal solution, in terms of the distance in function values - a property well known to enable faster convergence rates in convex optimization settings (this is often also the main consequence of strong convexity needed in order to achieve fast rates for strongly-convex optimization).

Indeed, several recent works have used this classical result by Hoffman \cite{Hoffman52}, to achieve fast rates without strong convexity for \textit{offline optimization} problems,  see for instance the recent works \cite{Necoara16, wang2014, Karimi16, Beck15,Xu17}. Importantly, all of these results consider only \textit{stationary} settings, in which the objective function is fixed. As we show in the sequel, obtaining such fast rates results in the \textit{online convex optimization} setting is considerably more challenging since, as opposed to strong convexity which is a property that holds in the entire space, and hence, given a sequence of strongly convex functions, this property holds throughout the sequence, Hoffman's bound on the other-hand, is related to a specific subspace (which corresponds to the row-span of the linear transformation in the losses discussed above), and thus, given a sequence of such losses with different corresponding subspaces, these subspaces need not be, informally speaking, consistent with each other. Hence, a main contribution of this work is to formalize and analyze conditions under which this property could indeed be leveraged towards obtaining fast rates in a non-stationary online setting.

In particular, we show that our logarithmic-regret result holds for sequences which can be expressed as a combination of an arbitrary (adversarial) sequence and a stochastic sequence with certain stationary characteristics, which may potentially serve as reasonable approximation to many real-world data-streams, or when the data enjoys a low-rank structure. We report preliminary experimental results on both synthetic and real-world datasets which indeed show that OGD can outperform the Online Netwon Step method, both in terms of the regret and computational efficiency, on \textit{non-strongly convex} sequences.
 
\section{Preliminaries}

Throughout this work we use $\Vert\cdot\Vert$ to denote the Euclidean norm for vectors and the spectral norm (i.e., largest singular value) for matrices. Also, for a compact set $\mP\subset\reals^d$ and a matrix $\C\in\reals^{m\times d}$, we use the notation $\C\mP := \{\C\x ~|~\x\in\mP\}$.

\subsection{Convex optimization preliminaries}
\begin{definition}
Given a convex and compact set $\mK\subset\reals^d$ and a real-valued function $f$, differentiable over $\mK$, we say $f$ is $G$-Lipschitz over $\mK$ if $\forall \x\in\mK:$ $\Vert{\nabla{}f(\x)}\Vert \leq G$.
\end{definition}
In particular, if $f$ is convex, differentiable and $G$-Lipschitz over a convex and compact $\mK$, we have that $\forall \x,\y\in\mK$:
\begin{eqnarray}\label{eq:Lip}
f(\x)-f(\y) \leq (\x-\y)^{\top}\nabla{}f(\x) \leq G\Vert{\x-\y}\Vert.
\end{eqnarray}

\begin{definition}
Given a convex and compact set $\mK\subset\reals^d$ and a real-valued function $f$ which is differentiable over $\mK$, we say $f$ is $\alpha$-strongly convex over $\mK$ if $\forall \x,\y\in\mK$: $f(\x) \leq f(\y) + (\x-\y)^{\top}\nabla{}f(\x) - \frac{\alpha}{2}\Vert{\x-\y}\Vert^2$.

\end{definition}

We recall the first-order optimality condition for convex differentiable functions (see for instance \cite{boyd2004convex}): for any $\mK\subset\reals^d$, convex and compact, and a real-valued function $f$, convex and differentiable over $\mK$, we have that $\forall \x\in\mK,~\x^*\in\argmin_{\y\in\mK}f(\y)$: $(\x^*-\x)^{\top}\nabla{}f(\x^*) \leq 0$.


\subsection{Online convex optimization preliminaries}
We now briefly recall the setting of Online Convex Optimization (OCO). For a more in-depth introduction we refer the reader to \cite{Hazan16,SSS12}.

In the OCO problem, a decision maker (DM) is required to iteratively choose points in a fixed convex and compact set $\mK \subset\reals^d$. On each round $t$, after the DM makes his choice, i.e., chooses some $\x_t\in\mK$, a convex function $f_t:\mK\rightarrow\reals$ is revealed, and the DM suffers the loss $f_t(\x_t)$. This process continues for $T$ rounds, where $T$ is assumed to be known in advanced.
The goal is to design an algorithm for choosing the actions of the DM so to minimize a quantity called \textit{regret}, which is given by
\vspace{-10pt}
\begin{eqnarray*}
\regret_T := \sum_{t=1}^Tf_t(\x_t) - \min_{\x\in\mK}\sum_{t=1}^Tf_t(\x).
\end{eqnarray*}
It is well known than an algorithm known as Online Gradient Descent, see Algorithm \ref{alg:ogd} below, can guarantee a $O(GD\sqrt{T})$ bound on the regret, where $D$ is the $\ell_2$ diameter of $\mK$ and $G$ is an $\ell_2$ upper bound on the gradients of the functions $f_1,\dots,f_T$ \cite{Zinkevich03}, which is in general optimal. It is also known that when all functions $f_1,\dots,f_T$ are $\alpha$-strongly convex, the same algorithm (though with different learning rate) guarantees $O((G^2/\alpha)\log{T})$ regret \cite{AHK07}, which is also optimal under this assumption \cite{Hazan14}.

\begin{algorithm}
\caption{Online (projected) Gradient Descent}
\label{alg:ogd}
\begin{algorithmic}[1]
\STATE $\x_1 \gets $ some arbitrary point in $\mK$
\FOR{$t=1\dots T$}
\STATE $\y_{t+1} \gets \x_t - \eta_t\nabla{}f_t(\x_t)$
\STATE $\x_{t+1} \gets \arg\min_{\x\in\mK}\Vert{\x-\y_{t+1}}\Vert^2$
\ENDFOR
\end{algorithmic}
\end{algorithm}

\subsection{Hoffman's bound and the quadratic growth property}

\begin{definition}
We say a matrix $\C\in\reals^{m\times d}$ is $\sigma$-Hoffman with respect to a convex and compact polytope $\mP\subset\reals^d$ for some $\sigma >0$, if for any vector $\c\in\reals^m$ such that the set $\mP(\C,\c) := \{\x\in\mP ~ | ~ \C\x = \c\}$ is not empty, it holds that
$\forall\x\in\mP$: $\dist(\x,\mP(\C,\c))^2 \leq \sigma^{-1}\Vert{\C\x-\c}\Vert^2$.
\end{definition}

The following Lemma, originally due to Hoffman \cite{Hoffman52}, shows that a Hoffman parameter bounded away from zero, always exists. Here we give the result in rephrased form. A proof is given in the appendix for completeness. 

\begin{lemma}\label{lem:hoffman}
Let $\mP:=\{\x\in\reals^d ~ | ~ \A\x \leq \b\}$  be a compact and convex polytope and let $\C\in\reals^{m\times d}$. Given a vector $\c\in\reals^m$, define the set $\mP(\C,\c) := \{\x\in\mP ~ | ~ \C\x=\c\}$. If $\mP(\C,\c) \neq \emptyset$, then there exists $\sigma > 0$ such that
$\forall \x\in\mP$: $\dist(\x,\mP(\C,\c))^2\leq \sigma^{-1}\Vert{\C\x - \c}\Vert^2$.
Moreover, we have the bound $\sigma \geq \min_{\Q\in\mathcal{M}}\lambda_{\min}\left({\Q\Q^{\top}}\right)$, where $\mathcal{M}$ is the set of all $d\times d$ matrices whose rows are linearly independent rows of the matrix $\M := (\A^{\top},~ \C^{\top})^{\top}$, and $\lambda_{\min}(\cdot)$ denotes the smallest non-zero eigenvalue.
\end{lemma}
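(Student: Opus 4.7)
The proof is a standard Carathéodory-plus-KKT argument. Fix an arbitrary $\x \in \mP$, let $\x^*$ be the (unique) Euclidean projection of $\x$ onto the closed convex set $\mP(\C,\c)$, so $\dist(\x,\mP(\C,\c))=\Vert\x-\x^*\Vert$, and let $I$ be the set of indices $i$ for which $\A_i\x^* = b_i$. Writing the KKT conditions for the projection problem $\min_{\y}\Vert\y-\x\Vert^2$ subject to $\A\y\leq \b$, $\C\y=\c$ yields multipliers $\lambda\geq 0$ (supported on $I$) and $\mu\in\reals^m$ with
\[
\x - \x^* \;=\; \sum_{i\in I}\lambda_i\A_i^{\top} \;+\; \C^{\top}\mu.
\]

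\textbf{Reduction to linearly independent rows.} Split $\mu=\mu^+-\mu^-$ with $\mu^\pm\geq 0$ to express $\x-\x^*$ as a \emph{conic} combination of vectors from $\{\A_i^\top\}_{i\in I}\cup\{\C_j^\top\}_{j}\cup\{-\C_j^\top\}_{j}$. By Carathéodory's theorem for cones, the same vector is a non-negative combination of a linearly independent subset of these generators; clearly $\C_j^\top$ and $-\C_j^\top$ cannot both be chosen. Collecting the chosen rows into a matrix $\Q_0$ and carrying along signs produces $\nu_0$ with $\x-\x^* = \Q_0^\top\nu_0$, where the $\A$-entries of $\nu_0$ are non-negative, the $\C$-entries have arbitrary sign, and the rows of $\Q_0$ are linearly independent rows of $\M$. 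Because $\mP$ is compact, the rows of $\A$ span $\reals^d$, so we may pad $\Q_0$ with additional rows of $\A$ (assigning them zero coefficients in $\nu_0$) to obtain a $d\times d$ matrix $\Q\in\mathcal{M}$ with corresponding coefficient vector $\nu$ satisfying $\x-\x^*=\Q^\top\nu$.

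\textbf{Two bounds, one on each side.} First, since $\Q$ is invertible, $\Vert\x-\x^*\Vert^2 = \nu^\top\Q\Q^\top\nu \geq \lambda_{\min}(\Q\Q^\top)\Vert\nu\Vert^2$, giving $\Vert\nu\Vert\leq \Vert\x-\x^*\Vert/\sqrt{\lambda_{\min}(\Q\Q^\top)}$. Second, compute
\[
\Vert\x-\x^*\Vert^2 \;=\; \nu^\top\Q(\x-\x^*) \;=\; \sum_{\text{$\A$-rows}}\!\nu_i\bigl(\A_i\x-b_i\bigr) \;+\; \sum_{\text{$\C$-rows}}\!\nu_j\bigl(\C_j\x-c_j\bigr).
\]
For $\A$-rows with nonzero $\nu_i$: $i\in I$ so $\A_i\x^*=b_i$, $\nu_i\geq 0$, and $\A_i\x\leq b_i$, hence each such term is $\leq 0$. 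The remaining $\C$-part is bounded by Cauchy–Schwarz by $\Vert\nu\Vert\cdot\Vert\C\x-\c\Vert$. Combining the two bounds,
\[
\Vert\x-\x^*\Vert^2 \;\leq\; \frac{\Vert\x-\x^*\Vert}{\sqrt{\lambda_{\min}(\Q\Q^\top)}}\,\Vert\C\x-\c\Vert,
\]
so $\Vert\x-\x^*\Vert^2 \leq \lambda_{\min}(\Q\Q^\top)^{-1}\Vert\C\x-\c\Vert^2$. Since $\Q$ belongs to the finite set $\mathcal{M}$, taking $\sigma := \min_{\Q\in\mathcal{M}}\lambda_{\min}(\Q\Q^\top) > 0$ yields the claim uniformly in $\x$.

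\textbf{Main obstacle.} The delicate step is the Carathéodory reduction: the multiplier $\mu$ is sign-unrestricted, so we must first symmetrize by adjoining both $+\C_j^\top$ and $-\C_j^\top$ before applying the conic version of Carathéodory, and then argue that only one of each pair survives in the linearly independent subset. Everything else (KKT, the two-sided bound on $\Vert\nu\Vert$, and the sign of the $\A$-terms) is routine, but ensuring the resulting rows are simultaneously \emph{linearly independent}, \emph{drawn from $\M$}, and \emph{compatible with the non-negativity used in the final inequality} is the heart of the argument.
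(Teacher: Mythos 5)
Your proof is correct, and it reaches the result by a genuinely different route from the paper's. The paper obtains its multiplier vector through two layers of duality: it rewrites $\dist(\x,\mP(\C,\c))$ as a min--max, applies Sion's minimax theorem and then LP duality to arrive at a program of the form $\max\{\q^{\top}(\tilde{\A}\x-\tilde{\b}) \,|\, \tilde{\A}^{\top}\q=\u^*,\ \q\geq 0\}$ with $\tilde{\A} := (\A^{\top},\C^{\top},-\C^{\top})^{\top}$, and then argues that an optimizer may be taken at a vertex of that polyhedron, so that its support picks out linearly independent rows; the bound $\Vert{\hat{\q}^*}\Vert\leq\lambda_{\min}(\M^*\M^{*\top})^{-1/2}$ and Cauchy--Schwarz finish exactly as in your argument. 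You instead extract the multipliers directly from the KKT conditions of the primal projection problem (valid without further qualification since all constraints are affine) and replace the vertex argument by conic Carath\'eodory --- which is the same combinatorial fact (extreme points of $\{\q\geq 0 \,|\, \tilde{\A}^{\top}\q=\u\}$ have linearly independent support) in primal clothing; your observation that $\C_j^{\top}$ and $-\C_j^{\top}$ cannot both survive the reduction correctly resolves the sign-splitting of the equality multipliers. Your route avoids both Sion's theorem and LP duality, at the cost of doing the bookkeeping on the coefficient signs by hand, which you do correctly. One small point in your favor: the set $\mathcal{M}$ in the statement consists of $d\times d$ matrices, and your padding step (using that compactness of $\mP$ forces the rows of $\A$ to span $\reals^d$) makes the produced $\Q$ literally a member of $\mathcal{M}$, a detail the paper's proof leaves implicit since its $\M^*$ may have fewer than $d$ rows.
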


\begin{definition}. 
We say a function $f:\reals^d\rightarrow\reals$ has the quadratic growth property with parameter $\kappa$ with respect to a compact and convex set $\mK\subset\reals^d$, if it holds that
$\forall\x\in\mK$: $\dist(\x,\mX^*)^2 \leq \frac{2}{\kappa}\left({f(\x) - f^*}\right)$,
where $\mX^* := \arg\min_{\y\in\mK}f(\y)$ and $f^* := \min_{\y\in\mK}f(\y)$.
\end{definition} 

The following lemma, which will be instrumental in the proof of our main result, demonstrates the connection between Hoffman's bound and the quadratic growth property for (possibly stochastic) convex objectives. 
\begin{lemma}[from Hoffman's bound to quadratic growth]\label{lem:hoffmanExpectation}
Let $\mP\subset\reals^d$ be a convex and compact polytope.
Let $\mD$ be a distribution over pairs $(g(\cdot),\C)\in(\reals^m\rightarrow\reals)\times\reals^{m\times d}$ satisfying:
\begin{enumerate}
\item for each pair $(g,\C)$ in the support of $\mD$, the function $g$ is differentiable, $G$-Lipschitz, and $\alpha$-strongly convex over $\C\mP$.
\item the function $F(\x):=\E_{(g,\C)\sim\mD}\left[{g(\C\x)}\right]$ is differentiable over $\mP$.
\item  the expectation $\E_{(g(\cdot),\C)\sim\mD}[\C^{\top}\C]$ exists.
\end{enumerate}
Let $\C_{\mD}\in\reals^{k\times d}$ be such that  $\C_{\mD}^{\top}\C_{\mD} = \E_{(g(\cdot),\C)\sim\mD}[\C^{\top}\C]$, and denote by $\sigma$ the Hoffman constant of $\C_{\mD}$ w.r.t. $\mP$. Finally, define $\mX^*:=\arg\min_{\y\in\mP}\{F(\y):=\E_{(g,\C)\sim\mD}\left[{g(\C\y)}\right]\}$. Then, there exists $\c_{\mD}\in\reals^k$ such that $\x\in\mX^*$ $\Longleftrightarrow$ $\C_{\mD}\x = \c_{\mD}$. Moreover, $\forall \x\in\mP$: $\dist(\x,\mX^*)^2 \leq \frac{1}{\sigma}\Vert{\C_{\mD}\x - \c_{\mD}}\Vert^2 \leq \frac{2}{\alpha\sigma}\left({F(\x) - \min_{\y\in\mP}F(\y)}\right)$.
\end{lemma}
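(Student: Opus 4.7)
The plan is to build the quadratic-growth chain from a ``strong convexity in expectation'' inequality that one can lift from the per-sample strong convexity of $g$ on $\C\mP$. For any $\x,\x' \in \mP$, strong convexity of $g$ at $\C\x'$ gives $g(\C\x) \geq g(\C\x') + (\C(\x-\x'))^\top \nabla g(\C\x') + \frac{\alpha}{2}\Vert\C(\x-\x')\Vert^2$. Taking expectation over $(g,\C) \sim \mD$, using differentiability of $F$ to write $\nabla F(\x') = \E_{(g,\C)\sim\mD}[\C^\top \nabla g(\C\x')]$, and using the defining identity $(\x-\x')^\top \E[\C^\top \C] (\x-\x') = \Vert\C_\mD(\x-\x')\Vert^2 = \E[\Vert\C(\x-\x')\Vert^2]$, I obtain the inequality
\begin{equation*}
F(\x) \geq F(\x') + (\x-\x')^\top \nabla F(\x') + \frac{\alpha}{2}\Vert\C_\mD(\x-\x')\Vert^2, \qquad (\star)
\end{equation*}
which will act as a substitute for standard strong convexity throughout the argument.

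Next, I would use $(\star)$ to show that $\mX^* = \mP(\C_\mD, \c_\mD)$ for an appropriate vector $\c_\mD$. For the forward direction, apply $(\star)$ with two arbitrary optima $\x,\x' \in \mX^*$: since $F(\x) = F(\x')$ and $(\x-\x')^\top \nabla F(\x') \geq 0$ by first-order optimality at $\x'$, the quadratic term must vanish, so $\C_\mD\x = \C_\mD\x'$, making $\c_\mD := \C_\mD\x^*$ well-defined for any $\x^*\in\mX^*$. For the reverse direction, suppose $\x\in\mP$ satisfies $\C_\mD\x = \c_\mD$. Then $\Vert\C_\mD(\x-\x^*)\Vert^2 = 0$, so by the defining identity $\E_{(g,\C)\sim\mD}[\Vert\C(\x-\x^*)\Vert^2] = 0$; since the integrand is nonnegative, $\C\x = \C\x^*$ almost surely under $\mD$, which forces $g(\C\x) = g(\C\x^*)$ almost surely and hence $F(\x) = F(\x^*)$, i.e.\ $\x\in\mX^*$.

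With $\mX^* = \mP(\C_\mD,\c_\mD)$ established and nonempty, the first inequality $\dist(\x,\mX^*)^2 \leq \sigma^{-1}\Vert\C_\mD\x-\c_\mD\Vert^2$ is simply Hoffman's bound (Lemma~\ref{lem:hoffman}) applied to $\C_\mD$ w.r.t.\ $\mP$ with right-hand side $\c_\mD$. For the second inequality, specialize $(\star)$ to $\x'=\x^*\in\mX^*$ and drop the gradient term using first-order optimality $(\x-\x^*)^\top \nabla F(\x^*) \geq 0$; combined with $\Vert\C_\mD(\x-\x^*)\Vert^2 = \Vert\C_\mD\x-\c_\mD\Vert^2$ this yields $F(\x)-F(\x^*) \geq \frac{\alpha}{2}\Vert\C_\mD\x-\c_\mD\Vert^2$, which upon rearrangement and division by $\sigma$ gives the claimed bound $\sigma^{-1}\Vert\C_\mD\x-\c_\mD\Vert^2 \leq \frac{2}{\alpha\sigma}(F(\x)-F^*)$.

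The delicate step is the reverse direction of the characterization of $\mX^*$: the identity $\Vert\C_\mD(\x-\x^*)\Vert^2 = 0$ only controls the aggregate second moment $\E[\Vert\C(\x-\x^*)\Vert^2]$, and one must pass from this to the pathwise identity $\C\x = \C\x^*$ almost surely in order to conclude $F(\x) = F(\x^*)$. Everything else is essentially bookkeeping: $(\star)$ is the expected version of strong convexity, Hoffman's bound is invoked as a black box from Lemma~\ref{lem:hoffman}, and the remaining manipulations use only the first-order optimality condition stated in the preliminaries.
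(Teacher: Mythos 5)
Your proposal is correct and follows essentially the same route as the paper: per-sample strong convexity lifted to the inequality $F(\x)\geq F(\x')+(\x-\x')^{\top}\nabla F(\x')+\frac{\alpha}{2}\Vert{\C_{\mD}(\x-\x')}\Vert^2$, combined with first-order optimality, yields both directions of the characterization $\mX^*=\{\x\in\mP:\C_{\mD}\x=\c_{\mD}\}$ and, together with Hoffman's bound, the quadratic-growth chain. The only cosmetic difference is in the reverse inclusion, where you deduce $\C\x=\C\x^*$ almost surely from the vanishing second moment, while the paper passes through Jensen's inequality and the $G$-Lipschitz bound; both steps are valid.
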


\begin{proof}
For any $\x\in\mP$ and $\x^*\in\mX^*$ it holds that 
{\small
\begin{align*}
F(\x^*) - F(\x) &= \E_{(g,\C)\sim\mD}[g(\C\x^*) - g(\C\x)] \\
&\underset{(a)}{\leq} \E_{(g,\C)\sim\mD}\Big[(\x^*-\x)^{\top}\C^{\top}\nabla{}g(\C\x^*) \\
&- \frac{\alpha}{2}\Vert{\C(\x^*-\x)}\Vert^2\Big] \\
&= (\x^*-\x)^{\top}\E_{(g,\C)\sim\mD}\left[{\C^{\top}\nabla{}g(\C\x^*)}\right] \\
&- \frac{\alpha}{2}(\x^*-\x)^{\top}\E_{(g,\C)\sim\mD}\left[{\C^{\top}\C}\right](\x^*-\x) \\
&\underset{(b)}{=} (\x^*-\x)^{\top}\nabla{}F(\x^*)  -\frac{\alpha}{2}\Vert{\C_{\mD}(\x^*-\x)}\Vert^2 \\
&\underset{(c)}{\leq}  -\frac{\alpha}{2}\Vert{\C_{\mD}(\x^*-\x)}\Vert^2,
\end{align*}}
where (a) follows since each $g(\cdot)$ in the support of $\mD$ is differentiable and $\alpha$-strongly convex over $\C\mP$, (b) follows since $F(\x)$ is differentaible over $\mP$ and hence it's gradient vector is given by $\nabla{}F(\x) = \E_{(g,\C)\sim\mD}[\frac{d}{d\x}g(\C\x)] = \E_{(g,\C)\sim\mD}[\C^{\top}\nabla{}g(\C\x)]$, and (c) follows form the first-order optimality condition for $F(\cdot)$.
Thus, $\forall \x^*\in\mX^*, \x\in\mP$: 
\begin{eqnarray}\label{eq:hoffExp:1}
\Vert{\C_{\mD}(\x^*-\x)}\Vert^2 \leq \frac{2}{\alpha}\left({F(\x) - F(\x^*)}\right).
\end{eqnarray}
Thus, setting $\c_{\mD} = \C_{\mD}\x^*$ for some $\x^*\in\mX^*$, directly gives the $\Longrightarrow$ direction of the first part of the lemma

To prove the $\Longleftarrow$ direction of the first part of the lemma, let $\x\in\mP$ such that $\C_{\mD}\x=\c_{\mD}$. Then we have that
\begin{eqnarray*}
0 &=& (\x-\x^*)\C_{\mD}^{\top}\C_{\mD}(\x-\x^*) \\
&=& \E_{(g,\C)\sim\mD}[\Vert{\C\x-\C\x^*}\Vert^2] \\
&\geq &\left({\E_{(g,\C)\sim\mD}[\Vert{\C\x-\C\x^*}\Vert]}\right)^2.
\end{eqnarray*}

Since for each pair $(g,\C)$ in the support of $\mD$, $g$ is convex and $G$-Lipschitz over $\C\mP$, using Eq. \eqref{eq:Lip} we have that
\begin{eqnarray*}
F(\x) - F(\x^*) &=& \E_{(g,\C)\sim\mD}[g(\C\x) - g(\C\x^*)] \\
&\leq &\E_{(g,\C)\sim\mD}[G\Vert{\C\x - \C\x^*}\Vert] = 0,
\end{eqnarray*}
meaning $\x\in\mX^*$, which completes the proof of the first part of the lemma. The second part of the lemma follows directly form combining the first part of the lemma with Hoffman's bound (Lemma \ref{lem:hoffman}) and Eq. \eqref{eq:hoffExp:1}.
\end{proof}

\section{Informal Statement of Results and Examples}

We now give an informal statement of our theoretical results, followed by several concrete examples to demonstrate possible applications. We then conclude the section by drawing a connection between our setting and \textit{online exp-concave optimization} and the Online Newton Step algorithm.

\subsection{Logarithmic regret for Online Gradient Descent without strong convexity}
Suppose that the feasible set $\mP$ is a convex and compact polytope in $\reals^d$ and suppose all loss functions are of the form $f_t(\x) := g_t(\C_t\x)$, where $g_t(\cdot)$ is differentiable and $\alpha_1$-strongly convex. Suppose further, that there exists a matrix $\M\in\reals^{k\times d}$ such that for all $t\in[T]$, $\M^{\top}\M\succeq \E[\C_t^{\top}\C_t] \succeq \alpha_2\M^{\top}\M$, where the expectation is with respect to possible randomness in the choice of $\C_t$. Then, we show there exists a choice of step-sizes $\{\eta_t\}_{t\in[T]}$ such that OGD guarantees $O(\log{T})$ regret (treating all other quantities as constants). 

We note that while the requirement $\M^{\top}\M\succeq\E[\C_t^{\top}\C_t] \succeq \alpha_2\M^{\top}\M$ seems not standard at first glance, observe that when $\C_t$ is full-rank, and hence $f_t(\x)$ is in particular strongly convex, this requirement holds trivially with $\M =\I$. Hence, this condition is natural for dealing with loss functions that are strongly convex only on a restricted subspace of $\reals^d$, requiring them all to be consistent, at least in expectation, with the same subspace. In Subsection \ref{sec:examples} we discuss several settings of interest in which it is reasonable to assume this requirement holds.


\subsection{Examples of relevant settings and loss functions}\label{sec:examples}

\textbf{Linear regression and Lasso:}
Consider the $\ell_p$ linear regression loss function  $f_t(\x) := \frac{1}{2}\Vert{\A_t\x - \b_t}\Vert_p^2$, with $p\in(1,2]$. In particular, when $p=2$ and the feasible polytope is an $\ell_1$-ball, i.e., $\mP := \{\x\in\reals^d ~ | ~ \Vert{\x}\Vert_1 \leq k\}$, for some $k >0$, we get an online version of the famous LASSO problem \cite{Lasso96}.

For our log-regret result to hold for \textit{deterministic data}, i.e., deterministic choice of $(\A_1,\b_1)\dots(\A_t,\b_T)$, it must hold that $\textrm{row-span}(\A_1) = \textrm{row-span}(\A_2) = \dots = \textrm{row-span}(\A_T)$. This is reasonable if $\A_t\in\reals^{m\times d}$ for a large enough value of $m$, and the data, i.e., the rows of $\A_1,\dots,\A_T$ lie in a certain low-dimensional subspace \footnote{this may be natural to assume for instance, if the data is the output of some dimension reduction technique such as the wildly used \textit{principal component analysis} procedure}.

A different \textit{non-deterministic} setting of interest is a ``semi-adversarial" model in which $\A_t := \tilde{\A}_t + \N_t$, where the matrices $\tilde{\A}_1,\dots\tilde{\A}_T$ are arbitrary and $\N_t\sim\mD$ i.i.d. for all $t\in[T]$, for some fixed (yet unknown) distribution $\mD$. Then, a sufficient condition for our fast OGD rate  to hold (in expectation), is that $\E_{\mD}[\N] = \mathbf{0}$ and $\E_{\mD}[\N^{\top}\N] \succeq \alpha\tilde{\A}_t^{\top}\tilde{\A}_t$ for all $t\in[T]$, for some $\alpha > 0$. That is, the data can be faithfully modeled as a deterministic sequence perturbed by a well-conditioned stochastic noise. For instance, such a model underlies the problem of \textit{Universal Linear Filtering} studied in \cite{Moon2009, GarberH13}.

Finally, if we can treat the data as generated by a stochastic mechanism that on each time $t$ randomly samples $\A_t$ from a (unknown) distribution $\mD_t$ (note we allow the distribution to change each round), then a sufficient condition for our log-regret result to hold is that there exists a  matrix $\M$ and $\alpha > 0$ such that for all $t\in[T]$: $\M^{\top}\M \succeq \E_{\A_t\sim\mD_t}[\A_t^{\top}\A_t] \succeq \alpha\M^{\top}\M$.

\textbf{Logistic regression:}
In online logistic regression, the loss on $m$ data points organized in a matrix $\A_t\in\reals^{m\times d}$, can be written as $f_t(\x) := \sum_{i=1}^m\log\left({1+\exp(\A_t^{(i)\top}\x)}\right)$,
where $\A_t^{(i)}$ denotes the $i$th row of the matrix $\A_t$. Observe $f_t(\x)$ can be rewritten as $f_t(\x) := g(\A_t\x)$ with 
$g(\y) := \sum_{i=1}^m\log\left({1+e^{\y_i}}\right)$. It not difficult to verify that for bounded $\y$, $g(\y)$ is indeed strongly convex, and hence this problem also falls into our setting. As in the linear regression case, if the feasible set is a polytope (e.g., standard selections are a $\ell_1$ or $\ell_{\infty}$ ball), then same assumptions on the matrices $\A_1,\dots,\A_T$ will allow to apply our log-regret result. 

\textbf{Online portfolio selection:}
In the online portfolio selection problem \cite{Hazan16}, the loss of a \textit{rebalancing} portfolio $\x$ (a point in the unit simplex) on $m$ consecutive trading rounds is given by $f_t(\x) = -\sum_{i=1}^m\log\left({\A_t^{(i)\top}\x}\right)$, where the rows of $\A_t\in\reals^{m\times d}$, $\A_t > \mathbf{0}$  (entry-wise) encodes the asset prices on each round. Similarly to the logistic regression example, we can write $f_t(\x) := g(\A_t\x)$ with $g(\y) := -\sum_{i=1}^m\log\left({\y_i}\right)$. Again, it is not hard to verify that if $\A_t \geq r$ (entry-wise) for some $r>0$, then $g(\cdot)$ is indeed strongly convex over the transformed  simplex $\A_t\Delta_d := \{\A_t\x ~ |~ \x\in\reals^d, ~\x\geq\mathbf{0},~ \sum_{i=1}^d\x_i = 1\}$. Again, our log-regret result holds under the same assumptions on the data $\A_1,\dots,\A_T$, as above.

\subsection{Connection with exp-concavity and the Online Newton Step algorithm}

A real-valued function $f$, twice-differentiable over a compact set $\mK\subset\reals^d$, is $\sigma$ exp-concave on $\mK$ if and only if
$\forall \x\in\mK:  \nabla^2{}f(\x) \succeq \sigma\nabla{}f(\x)\nabla{}f(\x)^{\top}$ \cite{Hazan16}.

Note that in case $f(\x) := g(\C\x)$, where $g$ is $\alpha$-strongly convex and twice-differentiable over $\C\mK$, denoting $G = \sup_{\x\in\C\mK}\Vert{\nabla{}g(\x)}\Vert$, we have that $\forall \x\in\mK$: {\small
\begin{align*}
\nabla^2{}f(\x) &= \C^{\top}\nabla^2g(\C\x)\C \underset{(a)}{\succeq} \alpha\C^{\top}\C \succeq \frac{\alpha}{G^2}\Vert{\nabla{}g(\C\x)}\Vert^2\C^{\top}\C \\
&\underset{(b)}{\succeq} \frac{\alpha}{G^2}\C^{\top}\nabla{}g(\C\x)\nabla{}g(\C\x)^{\top}\C = \frac{\alpha}{G^2}\nabla{}f(\x)\nabla{}f(\x)^{\top},
\end{align*}}
where (a) follows since $g(\cdot)$ is $\alpha$-strongly convex, and (b) follows since for any vector $\y$ we have that $\y^{\top}\C^{\top}\nabla{}g(\C\x)\nabla{}g(\C\x)^{\top}\C\y = ((\C\y)^{\top}\nabla{}g(\C\x))^2 \leq \Vert{\nabla{}g(\C\x)}\Vert^2\cdot\Vert{\C\y}\Vert^2 =  \Vert{\nabla{}g(\C\x)}\Vert^2\cdot\y^{\top}\C^{\top}\C\y$. Hence, $f(\x)$ is $\alpha/G^2$-exp-concave over $\mK$.  

Thus, if all loss functions are as above, i.e., $f_t(\x) := \g_t(\C_t\x)$, and we let $G$ be a uniform upper bound on the $\ell_2$ norm of the gradients of $g_t(\cdot)$ and $C$ be a uniform upper bound on the spectral norm of the matrices $\C_t$ , the Online Newton Step (ONS) algorithm \cite{Hazan16}, guarantees regret bound: $\regret_T(ONS)  = O\left({\frac{G^2}{\alpha} + CGD}\right)d\log{T}$, where $D$ is the $\ell_2$ diameter of $\mK$.


\section{Logarithmic Regret for OGD Without Strong Convexity}

In this section we present and prove our main result - a logarithmic regret bound for Online Gradient Descent (Algorithm \ref{alg:ogd}) without strong-convexity. As discussed, our result holds under certain conditions on the data which are captured in the following assumption. In the following Subsection \ref{subsec:app} we discuss several concrete examples in which this assumption holds.

\begin{assumption}\label{ass:ogd:dist}
Given a convex and compact polytope $\mP\subset\reals^d$,
a distribution $\mD$ over pairs $(g,\C)\in(\reals^m\rightarrow\reals)\times\reals^{m\times d}$, is said to satisfy Assumption \ref{ass:ogd:dist} with parameters $(\M, \bar{G}, \alpha_1, \alpha_2)\in\reals^{k\times d}\times\reals_+^3$ w.r.t. $\mP$, if it holds that
\begin{enumerate}
\item each function $g$, part of a pair $(g,\C)$ in the support of $\mD$, is differentiable, $\alpha_1$-strongly convex over $\C\mP := \{\C\x ~ | ~\x\in\mP\}$ and $G$-Lipschitz over $\C\mP$, for some finite $G>0$
\item the function $F(\x):=\E_{(g,\C)\sim\mD}\left[{g(\C\x)}\right]\}$ is differentiable over $\mP$
\item $\forall \x\in\mP$: $\E_{(g,\C)\sim\mD}\left[{\Vert{\frac{d}{d\x}g(\C\x)}\Vert^2}\right]= \E_{(g,\C)\sim\mD}\left[{\Vert{\C^{\top}\nabla{}g(\C\x)}\Vert^2}\right] \leq \bar{G}^2$
\item  the expectation $\E_{(g(\cdot),\C)\sim\mD}[\C^{\top}\C]$ exists and satisfies $\E_{(g,\C)\sim\mD}[\C^{\top}\C] \succeq \alpha_2\M^{\top}\M$.
\end{enumerate}
\end{assumption}

We can now state our main theorem, Theorem \ref{thm:ogd:master}. While the theorem holds under quite general conditions, we refer the reader again to Section \ref{sec:examples} for discussion of concrete applications.

\begin{theorem}\label{thm:ogd:master}[OGD Master Theorem]
Fix a convex and compact polytope $\mP\subset \reals^d$.
Consider a sequence of $T$ distributions $\mD_1,\dots\mD_T$ over $(\reals^m\rightarrow\reals)\times\reals^{m\times d}$ which satisfy Assumption \ref{ass:ogd:dist} with parameters $(\bar{\C},\bar{G},\alpha_1,\alpha_2)$ w.r.t. $\mP$, where $\bar{\C}$ is a matrix satisfying $\bar{\C}^{\top}\bar{\C} = \frac{1}{T}\sum_{t=1}^T\E_{(g,\C)\sim\mD_t}[\C^{\top}\C]$. Suppose further that $\bar{\C}$ is $\sigma$-Hoffman w.r.t. $\mP$.
Let $f_1(\x)\dots f_T(\x)$ be a sequence of loss functions such that $f_t(\x) = g_t(\C_t\x)$, with $(g_t,\C_t)\sim\mD_t$ independently of the functions $\{f_{\tau}\}_{\tau\in[T]\setminus\{t\}}$. Then, applying Algorithm \ref{alg:ogd} with step-size $\eta_t = \frac{1}{\alpha_1\alpha_2\sigma{}t}$, w.r.t. the losses $f_1\dots f_T$ and the polytope $\mP$, guarantees that
\vspace{-6pt}
\begin{eqnarray*}
&&\max_{\x\in\mP}\E_{f_1\sim\mD_1\dots f_T\sim\mD_T}\left[{\sum_{t=1}^Tf_t(\x_t) - \sum_{t=1}^Tf_t(\x)}\right] \leq \\
&&\frac{\alpha_1\alpha_2\sigma{}D^2}{2} + \frac{\bar{G}^2}{2\alpha_1\alpha_2\sigma}(1 + \ln{T}).
\end{eqnarray*}
\end{theorem}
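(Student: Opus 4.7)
My plan is to apply Lemma~\ref{lem:hoffmanExpectation} to the uniform mixture $\mD:=\tfrac{1}{T}\sum_t\mD_t$, whose associated second moment is exactly $\bar{\C}^\top\bar{\C}$ by construction. This produces a vector $\bar{\c}\in\reals^k$ identifying the minimizer set $\mX^*:=\argmin_{\x\in\mP}\sum_tF_t(\x)$ with the slice $\{\x\in\mP:\bar{\C}\x=\bar{\c}\}$ of the polytope, and supplies the Hoffman inequality $\dist(\x,\mX^*)^2\le\sigma^{-1}\|\bar{\C}\x-\bar{\c}\|^2$ on all of $\mP$. Since $\E[f_t(\x)]=F_t(\x)$ for every fixed $\x\in\mP$, the $\max_{\x\in\mP}$ in the statement is attained at any $\x^*\in\mX^*$, so it suffices to bound $\sum_t\E[F_t(\x_t)-F_t(\x^*)]$ for such an $\x^*$.

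\textbf{Per-round inequality.} Fix $\x^*\in\mX^*$. Strong convexity of $g_t$ over $\C_t\mP$ together with the chain rule gives, for each realization,
\[
f_t(\x_t)-f_t(\x^*)\le(\x_t-\x^*)^\top\nabla f_t(\x_t)-\tfrac{\alpha_1}{2}\|\C_t(\x_t-\x^*)\|^2,
\]
while nonexpansiveness of the Euclidean projection applied to $\y_{t+1}=\x_t-\eta_t\nabla f_t(\x_t)$ yields the standard OGD step inequality $(\x_t-\x^*)^\top\nabla f_t(\x_t)\le(\|\x_t-\x^*\|^2-\|\x_{t+1}-\x^*\|^2)/(2\eta_t)+\tfrac{\eta_t}{2}\|\nabla f_t(\x_t)\|^2$. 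Taking the expectation over $(g_t,\C_t)\sim\mD_t$ conditionally on the history, so that $\x_t$ is deterministic, Assumption~\ref{ass:ogd:dist}(3) controls the gradient-norm term by $\bar{G}^2$, and Assumption~\ref{ass:ogd:dist}(4) combined with $\bar{\C}\x^*=\bar{\c}$ and Hoffman's bound yields
\begin{align*}
\E\bigl[\|\C_t(\x_t-\x^*)\|^2\bigm|\mathrm{history}\bigr]
&\ge \alpha_2\|\bar{\C}\x_t-\bar{\c}\|^2 \\
&\ge \alpha_2\sigma\,\dist(\x_t,\mX^*)^2.
\end{align*}
An outer expectation over the history then produces, for every $t$, the master per-round inequality
\[
\E[F_t(\x_t)-F_t(\x^*)]\le\tfrac{\E\|\x_t-\x^*\|^2-\E\|\x_{t+1}-\x^*\|^2}{2\eta_t}+\tfrac{\eta_t\bar{G}^2}{2}-\tfrac{\alpha_1\alpha_2\sigma}{2}\E\,\dist(\x_t,\mX^*)^2.
\]

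\textbf{Summation and main obstacle.} With $\eta_t=1/(\alpha_1\alpha_2\sigma\,t)$ the harmonic piece $\tfrac{\bar{G}^2}{2}\sum_t\eta_t\le\bar{G}^2(1+\ln T)/(2\alpha_1\alpha_2\sigma)$ already supplies the second term of the claimed bound. Abel summation of the telescoping fraction gives $\tfrac{\alpha_1\alpha_2\sigma}{2}\sum_{t=1}^T t\,(E_t-E_{t+1})=\tfrac{\alpha_1\alpha_2\sigma}{2}\bigl(\sum_tE_t-T\,E_{T+1}\bigr)$ with $E_t:=\E\|\x_t-\x^*\|^2$. In the classical strongly-convex OGD analysis the $\tfrac{\alpha_1\alpha_2\sigma}{2}\sum_tE_t$ piece would cancel exactly against the contraction term, but here the contraction we actually extracted is in the smaller quantity $D_t:=\E\,\dist(\x_t,\mX^*)^2\le E_t$, leaving a residue $\tfrac{\alpha_1\alpha_2\sigma}{2}\sum_t(E_t-D_t)$ that must be controlled. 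Showing that this residue is bounded by a single $\alpha_1\alpha_2\sigma D^2/2$ -- rather than by the trivial per-round diameter bound, which would scale linearly in $T$ -- is the crux of the proof and the source of the first term of the theorem. The route I would take is to choose the comparator round-by-round as $\x^*_t:=\Pi_{\mX^*}(\x_t)$, so that the telescope is already in the $\dist$ sequence and the per-round inequality collapses to a clean strongly-convex-style telescope in $D_t$, and then exploit that $F$ is constant on $\mX^*$ to relate the resulting $\sum_t\E[F_t(\x^*_t)]$ back to the fixed-comparator loss $\sum_tF_t(\x^*)=T\,F^*$ at a one-time additive cost of $O(\alpha_1\alpha_2\sigma D^2)$. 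This reduction from a moving to a fixed comparator inside $\mX^*$, which has no analog in the strongly convex case where $\mX^*$ is a singleton, is the main technical step I anticipate being nontrivial.
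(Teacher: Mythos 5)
Your overall route is the paper's route: the paper also works with the moving comparator $\x_t^* := \argmin_{\x\in\mX^*}\Vert{\x-\x_t}\Vert^2$, so that the OGD telescope is in $\dist(\x_t,\mX^*)^2$ from the start, then applies the strong-convexity term $-\frac{\alpha_1}{2}\Vert{\C_t(\x_t-\x_t^*)}\Vert^2$, passes to expectation using independence of $\C_t$ from $(\x_t,\x_t^*)$, and invokes the Hoffman/quadratic-growth chain $\dist(\x_t,\mX^*)^2\le\sigma^{-1}\Vert{\bar{\C}(\x_t-\x_t^*)}\Vert^2\le(\alpha_2\sigma)^{-1}\Vert{\bar{\C}_t(\x_t-\x_t^*)}\Vert^2$; with $\eta_t=1/(\alpha_1\alpha_2\sigma t)$ the coefficient $\bigl(\frac{1}{\eta_t}-\frac{1}{\eta_{t-1}}\bigr)\frac{1}{\alpha_2\sigma}-\alpha_1$ vanishes identically, and what survives is exactly $\frac{D^2}{2\eta_1}+\frac{\bar{G}^2}{2}\sum_t\eta_t$. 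You correctly diagnose that a fixed comparator leaves an uncancelled residue $\sum_t(E_t-D_t)$ and that the projection comparator is the fix.

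The genuine gap is in the step you yourself flag as the crux: relating $\sum_t\E[F_t(\x_t^*)]$ back to $\sum_tF_t(\x^*)$. What you need is that \emph{each individual} $F_t(\x):=\E_{(g,\C)\sim\mD_t}[g(\C\x)]$ is constant on $\mX^*$, not merely that the average $F=\frac{1}{T}\sum_tF_t$ is (the latter is trivial and useless here, since the $\x_t^*$ vary with $t$). This per-round constancy is the first part of the paper's Lemma \ref{lem:uniformMinimizers}, and it is proved by an argument you have not supplied: writing $\bar{\C}^{\top}\bar{\C}=\frac{1}{T}\sum_t\bar{\C}_t^{\top}\bar{\C}_t$ with $\bar{\C}_t^{\top}\bar{\C}_t=\E_{\mD_t}[\C^{\top}\C]$, Lemma \ref{lem:hoffmanExpectation} applied to the uniform mixture gives $\Vert{\bar{\C}(\x^*-\y^*)}\Vert^2=0$ for any two minimizers, and since each summand is positive semidefinite this forces $(\x^*-\y^*)^{\top}\bar{\C}_t^{\top}\bar{\C}_t(\x^*-\y^*)=0$ for \emph{every} $t$; then Lipschitzness of $g$ plus Jensen gives $\vert F_t(\x^*)-F_t(\y^*)\vert\le G\sqrt{(\x^*-\y^*)^{\top}\bar{\C}_t^{\top}\bar{\C}_t(\x^*-\y^*)}=0$. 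Note also that your proposed accounting is off: this comparator swap is \emph{exact} in expectation, incurring no additive cost; the $\frac{\alpha_1\alpha_2\sigma D^2}{2}$ term in the bound comes solely from the $t=1$ telescope term $\frac{D^2}{2\eta_1}$, not from moving between comparators. Without the per-round constancy argument, the swap could in principle cost $\Omega(T)$, so this is the one missing ingredient rather than a bookkeeping detail.
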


Before we can prove the theorem, we need the following technical lemma which extends Lemma \ref{lem:hoffmanExpectation} from a single stochastic objective to a sequence of stochastic objectives, and hence plays a key role in our regret analysis . 

\begin{lemma}[from Hoffman's bound to quadratic growth of a sequence]\label{lem:uniformMinimizers}
Let $\mD_1,\dots,\mD_T$ be distributions over $(\reals^m\rightarrow\reals)\times\reals^{m\times d}$ satisfying Assumption \ref{ass:ogd:dist} with parameters $(\bar{\C}, \bar{G}, \alpha_1, \alpha_2)\in\reals^{k\times d}\times\reals_+^3$, where $\bar{\C}$ is a matrix satisfying $\bar{\C}^{\top}\bar{\C} = \frac{1}{T}\sum_{t=1}^T\E_{(g,\C)\sim\mD_t}[\C_t^{\top}\C_t]$.
Consider the function 
$F(\x) := \frac{1}{T}\sum_{t=1}^T\E_{(g,\C)\sim\mD_t}[g(\C\x)]$,
and define the set of feasible minimizers:
$\mX^* := \argmin_{\x\in\mP}F(\x)$.
Then, it holds that $\forall \x^*,\y^*\in\mX^*, ~ t\in[T]$: $\E_{(g,\C)\sim\mD_t}[g(\C\x^*)] = \E_{(g,\C)\sim\mD_t}[g(\C\y^*)]$.
Moreover, letting  $\sigma$ denote the Hoffman constant of $\bar{\C}$ w.r.t. the polytope $\mP$, we have that $\forall \x\in\mP, ~\x^*\in\mX^*: ~ \dist(\x,\mX^*)^2 \leq \sigma^{-1}\Vert{\bar{\C}(\x-\x^*)}\Vert^2$.
\end{lemma}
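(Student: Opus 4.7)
The plan is to reduce the lemma to Lemma \ref{lem:hoffmanExpectation} by constructing a single mixture distribution that captures the averaged objective $F$. Specifically, define $\bar{\mD}$ to be the two-stage distribution that first draws $t$ uniformly from $[T]$ and then samples $(g,\C)\sim\mD_t$. Then $F(\x) = \E_{(g,\C)\sim\bar{\mD}}[g(\C\x)]$ and $\E_{(g,\C)\sim\bar{\mD}}[\C^{\top}\C] = \frac{1}{T}\sum_{t=1}^T\E_{(g,\C)\sim\mD_t}[\C^{\top}\C] = \bar{\C}^{\top}\bar{\C}$. I would first verify that $\bar{\mD}$ inherits the hypotheses of Lemma \ref{lem:hoffmanExpectation} from Assumption \ref{ass:ogd:dist} applied to each $\mD_t$: every $g$ in the support of $\bar{\mD}$ is differentiable, $\alpha_1$-strongly convex and $G$-Lipschitz on the corresponding $\C\mP$; $F$ is differentiable on $\mP$ as a finite average of differentiable functions; and the required second-moment expectation exists.

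With this in hand, Lemma \ref{lem:hoffmanExpectation} applied to $\bar{\mD}$ yields a vector $\c_{\bar{\mD}}$ such that $\x\in\mX^* \iff \bar{\C}\x = \c_{\bar{\mD}}$, together with the inequality $\dist(\x,\mX^*)^2 \leq \sigma^{-1}\Vert{\bar{\C}\x - \c_{\bar{\mD}}}\Vert^2$. The second claim of the current lemma then follows immediately: for any $\x^*\in\mX^*$ we have $\bar{\C}\x^* = \c_{\bar{\mD}}$, so $\Vert{\bar{\C}\x - \c_{\bar{\mD}}}\Vert^2 = \Vert{\bar{\C}(\x-\x^*)}\Vert^2$, giving the quadratic-growth-type bound.

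For the first claim, I would use the characterization $\x\in\mX^* \iff \bar{\C}\x = \c_{\bar{\mD}}$ to conclude that for any $\x^*,\y^*\in\mX^*$, $\bar{\C}(\x^*-\y^*)=0$, hence
\[
0 = (\x^*-\y^*)^{\top}\bar{\C}^{\top}\bar{\C}(\x^*-\y^*) = \frac{1}{T}\sum_{t=1}^T \E_{(g,\C)\sim\mD_t}\bigl[\Vert{\C(\x^*-\y^*)}\Vert^2\bigr].
\]
Since each summand is nonnegative, each term vanishes, so under each $\mD_t$ one has $\C\x^* = \C\y^*$ almost surely. Using that each $g$ in the support of $\mD_t$ is $G$-Lipschitz over $\C\mP$ (together with \eqref{eq:Lip}) gives $g(\C\x^*) = g(\C\y^*)$ a.s., and taking expectations yields the equality of the per-round expected losses at any two minimizers.

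The only nontrivial step is establishing $\x\in\mX^* \iff \bar{\C}\x = \c_{\bar{\mD}}$ for the mixture, but this is exactly what Lemma \ref{lem:hoffmanExpectation} provides once the assumptions are verified; the potential subtlety is that $\C\mP$ and the Lipschitz/strong-convexity domain of each $g$ depend on the particular $(g,\C)$ drawn, but Assumption \ref{ass:ogd:dist} is stated pair-by-pair, so this passes through cleanly. Thus the whole proof amounts to packaging the sequence of distributions into one mixture and quoting the earlier lemma.
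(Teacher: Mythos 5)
Your proof is correct and follows essentially the same route as the paper: both construct the uniform mixture distribution over $\mD_1,\dots,\mD_T$, apply Lemma \ref{lem:hoffmanExpectation} to it to obtain the characterization of $\mX^*$ via $\bar{\C}$ and the distance bound, and then use the decomposition $\bar{\C}^{\top}\bar{\C}=\frac{1}{T}\sum_t\E_{\mD_t}[\C^{\top}\C]$ together with nonnegativity of each summand to deduce the per-round equality of expected losses at any two minimizers. The only cosmetic difference is that you conclude $\C\x^*=\C\y^*$ almost surely under each $\mD_t$ directly, while the paper routes the same fact through the Lipschitz bound and Jensen's inequality; both are valid.
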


\begin{proof}
Consider a distribution $\mD$ over $(\reals^m\rightarrow\reals)\times\reals^{m\times d}$, described by the following sampling procedure: pick $t\in[T]$ uniformly at random, and then sample $(g,\C)\sim\mD_t$. Clearly, it holds that
\begin{eqnarray*}
\forall \x\in\mP: ~ F(\x) &=& \frac{1}{T}\sum_{t=1}^T\E_{(g,\C)\sim\mD_t}[g(\C\x)] \\
&=& \E_{t\sim\textrm{Uni}[T]}[\E_{(g,\C)\sim\mD_t}[g(\C\x)]] \\
&=& \E_{(g,\C)\sim\mD}[g(\C\x)].
\end{eqnarray*}
Thus, it follows that
$\mX^* = \argmin_{\x\in\mP}\E_{(g,\C)\sim\mD}[g(\C\x)]$.

Note that since each distribution $\mD_t$ satisfies Assumption \ref{ass:ogd:dist}, it also satisfies  the assumptions of Lemma \ref{lem:hoffmanExpectation}.
It can be easily verified that as a consequence, the distribution $\mD$ also satisfies the assumptions of Lemma  \ref{lem:hoffmanExpectation}, and thus there exists a matrix $\bar{\C}$, satisfying $\bar{\C}^{\top}\bar{\C} = \E_{(g,\C)\sim\mD}[\C^{\top}\C]$, such that $\forall \x^*,\y^*\in\mX^*$:
\begin{eqnarray*}
\Vert{\bar{\C}(\x^*-\y^*)}\Vert^2 = (\x^*-\y^*)^{\top}\bar{\C}^{\top}\bar{\C}(\x^*-\y^*) =  0.
\end{eqnarray*}
By the definitions of $\bar{\C}$ and the distribution $\mD$, it holds that
\begin{eqnarray*}
\bar{\C}^{\top}\bar{\C} &=& \E_{(g,\C)\sim\mD}[\C^{\top}\C] \\
&=& \E_{t\sim\textrm{Uni}[T]}\left[{\E_{(g,\C)\sim\mD_t}[\C^{\top}\C]}\right] = \frac{1}{T}\sum_{t=1}^T\bar{\C}_t^{\top}\bar{\C}_t,
\end{eqnarray*}
where we define $\bar{\C}_t$ to be a matrix satisfying $\bar{\C}_t^{\top}\bar{\C}_t = \E_{(g,\C)\sim\mD_t}[\C^{\top}\C]$.

Thus, we have that
\begin{align*}
\forall \x^*,\y^*\in\mX^*: ~ 0 &= (\x^*-\y^*)^{\top}\bar{\C}^{\top}\bar{\C}(\x^*-\y^*) \\
&= \frac{1}{T}\sum_{t=1}^T(\x^*-\y^*)^{\top}\bar{\C}_t^{\top}\bar{\C}_t(\x^*-\y^*),
\end{align*}
which, since each $\bar{\C}_t^{\top}\bar{\C}_t$ is positive semidefinite, implies that $\forall \x^*,\y^*\in\mX^*,~t\in[T]$:
\begin{align}\label{eq:lem:uniformMinimizers:1}
  (\x^*-\y^*)^{\top}\bar{\C}_t^{\top}\bar{\C}_t(\x^*-\y^*) = 0.
\end{align}

Thus, fixing some $\x^*,\y^*\in\mX^*$ and $t\in[T]$, it holds that
\begin{align*}
&\E_{(g,\C)\sim\mD_t}[g(\C\x^*) - g(\C\y^*)]  \underset{(a)}{\leq}\\
&  \E_{(g,\C)\sim\mD_t}[G\Vert{\C(\x^*- \y^*)}\Vert]= \\
& G\sqrt{\left({\E_{(g,\C)\sim\mD_t}[\Vert{\C(\x^*- \y^*)}\Vert]}\right)^2} \leq \\
& G\sqrt{\E_{(g,\C)\sim\mD_t}[\Vert{\C(\x^*- \y^*)}\Vert^2]} =\\
& G\sqrt{\E_{(g,\C)\sim\mD_t}[(\x^*- \y^*)^{\top}\C^{\top}\C(\x^*- \y^*)]} =\\
& G\sqrt{(\x^*- \y^*)^{\top}\E_{(g,\C)\sim\mD_t}[\C^{\top}\C](\x^*- \y^*)} =\\
& G\sqrt{(\x^*- \y^*)^{\top}\bar{\C_t}^{\top}\bar{\C}_t(\x^*- \y^*)} \underset{(b)}{=} 0,
\end{align*}
where (a) follows via Eq. \eqref{eq:Lip} since each $g(\cdot)$ in the support of $\mD_t$ is convex and $G$-Lipschitz for some finite $G>0$, and (b) follows from Eq. \eqref{eq:lem:uniformMinimizers:1}. Thus, the first part of the lemma follows.

The second part of the lemma is a straightforward consequence of Lemma \ref{lem:hoffmanExpectation}, when applied to the distribution $\mD$, defined above.
\end{proof}

\begin{proof}[Proof of Theorem \ref{thm:ogd:master}]
Let us denote the set of of minimizers in hindsight: $\mX^* = \argmin_{\x\in\mP}\frac{1}{T}\sum_{t=1}^T\E_{f_t\sim\mD_t}\left[{f_t(\x)}\right]$.

Given the sequence of points generated by Algorithm \ref{alg:ogd} $\{\x_t\}_{t=1}^T$,  we define the sequence $\{\x_t^*\}_{t=1}^T$ as follows: 
$\forall t\geq 1$: $\x_t^* := \argmin_{\x\in\mX^*}\Vert{\x-\x_t}\Vert^2$,
i.e., $\x_t^*$ is the projection of $\x_t$ onto the set of optimal plays in hindsight $\mX^*$. 

Let us fix some $\x^*\in\mX^*$. By an application of Lemma \ref{lem:uniformMinimizers}, it holds that
{\small
\begin{align*}
\max_{\x\in\mP}\E\left[{\sum_{t=1}^Tf_t(\x_t) - f_t(\x)}\right] &= \E\left[{\sum_{t=1}^T f_t(\x_t) - f_t(\x^*)}\right] \\
&= \E\left[{\sum_{t=1}^T f_t(\x_t) - f_t(\x_t^*)}\right].
\end{align*}}

Thus, to prove the theorem, it suffices to upper bound $R_T := \E_{f_1\dots f_T}\left[{\sum_{t=1}^Tf_t(\x_t) - \sum_{t=1}^Tf_t(\x_t^*)}\right]$.

Let us also define the sequence $\{r_t := f_t(\x_t)-f_t(\x_t^*)\}_{t=1}^T$. Throughout the rest of the proof we write $\nabla_t$ as a short notation for $\nabla{}f_t(\x_t)
$.

As standard in the analysis of Online Gradient Descent, for every $t\geq 1$, we have that
\vspace{-5pt}
{\small
\begin{align*}
\Vert{\x_{t+1} - \x_t^*}\Vert^2 &\leq  \Vert{\y_{t+1} - \x_t^*}\Vert^2 = \Vert{\x_t - \eta_t\nabla_t - \x_t^*}\Vert^2 \\
&= \Vert{\x_t-\x_t^*}\Vert^2 - 2\eta_t(\x_t-\x_t^*)^{\top}\nabla_t + \eta_t^2\Vert{\nabla_t}\Vert^2,
\end{align*}}
where the first inequality holds since $\x_{t+1}$ is the orthogonal projection of $\y_{t+1}$ onto $\mP$.
Rearranging and recalling that $f_t(\x) = g_t(\C_t\x)$ for some $g_t,\C_t$, we have that
{\small
\begin{align*}
(\C_t(\x_t-\x_t^*))^{\top}\nabla{}g_t(\C_t\x_t) =& (\x_t - \x_t^*)^{\top}\nabla_t  \leq  \frac{1}{2\eta_t}\Vert{\x_t-\x_t^*}\Vert^2 \\
&- \frac{1}{2\eta_t}\Vert{\x_{t+1}-\x_t^*}\Vert^2 + \frac{\eta_t}{2}\Vert{\nabla_t}\Vert^2.
\end{align*}
}
Since $g_t(\cdot)$ is $\alpha_1$-strongly convex, we have
{\small
\begin{align*}
r_t  = g_t(\C_t\x_t) - g_t(\C_t\x_t^*) \leq & \frac{1}{2\eta_t}\Vert{\x_t-\x_t^*}\Vert^2 - \frac{1}{2\eta_t}\Vert{\x_{t+1}-\x_t^*}\Vert^2  \\
&-\frac{\alpha_1}{2}\Vert{\C_t(\x_t-\x_t^*)}\Vert^2  + \frac{\eta_t}{2}\Vert{\nabla_t}\Vert^2 .
\end{align*}}

By the definition of the sequence $\{\x_t^*\}_{t=1}^T$, we have
\begin{align*}
r_t \leq &\frac{1}{2\eta_t}\dist(\x_t,\mX^*)^2 - \frac{1}{2\eta_t}\dist(\x_{t+1},\mX^*)^2  \\
& - \frac{\alpha_1}{2}\Vert{\C_t(\x_t-\x_t^*)}\Vert^2  + \frac{\eta_t\Vert{\nabla_t}\Vert^2}{2}.
\end{align*}

Summing over all $T$ iterations, rearranging and taking expectation on both sides, we have that {\small
\begin{align}\label{eq:ogd:master:1}
R_T &\leq  \E_{f_1\dots f_T}\Bigg[\frac{1}{2\eta_1}\dist(\x_1,\mX^*)^2 + \frac{\bar{G}^2}{2}\sum_{t=1}^T\eta_t \nonumber \\
&+ \frac{1}{2}\sum_{t=2}^T\Big(\frac{1}{\eta_t}\dist(\x_t,\mX^*)^2 - \frac{1}{\eta_{t-1}}\dist(\x_t,\mX^*)^2 \nonumber \\
& -\alpha_1\Vert{\C_t(\x_t - \x_t^*)}\Vert^2\Big)\Bigg]\nonumber \\
& \leq \frac{1}{2}\E_{f_1\dots f_t}\Bigg[\sum_{t=2}^T\left({\frac{1}{\eta_t} - \frac{1}{\eta_{t-1}}}\right)\dist(\x_t,\mX^*)^2 \nonumber\\
&- \alpha_1\Vert{\C_t(\x_t - \x_t^*)}\Vert^2\Bigg] + \frac{D^2}{2\eta_1}+ \frac{\bar{G}^2}{2}\sum_{t=1}^T\eta_t,
\end{align}}
where we have used the fact that $\forall t\in[T]:$ $\E_{\mD_t}[\Vert{\nabla{}f_t(\x_t)}\Vert^2] \leq \bar{G}^2$.

Note that since for all $t$, $\C_t$ is independent of $\x_t,\x_t^*$, we have that
{\small
\begin{align}\label{eq:ogd:master:2}
&\E_{f_1\dots f_t}\Bigg[\left({\frac{1}{\eta_t} - \frac{1}{\eta_{t-1}}}\right)\dist(\x_t,\mX^*)^2 - \alpha_1\Vert{\C_t(\x_t - \x_t^*)}\Vert^2\Bigg]  =\nonumber\\
&\E_{f_1\dots f_{t-1}}\Bigg[\left({\frac{1}{\eta_t} - \frac{1}{\eta_{t-1}}}\right)\dist(\x_t,\mX^*)^2 \nonumber \\
&- \alpha_1(\x_t - \x_t^*)^{\top}\E_{f_t}[\C_t^{\top}\C_t](\x_t - \x_t^*)\Bigg]= \nonumber\\
&\E_{f_1\dots f_{t-1}}\Bigg[\left({\frac{1}{\eta_t} - \frac{1}{\eta_{t-1}}}\right)\dist(\x_t,\mX^*)^2 - \alpha_1\Vert{\bar{\C_t}(\x_t - \x_t^*)}\Vert^2\Bigg],  
\end{align}}
where we let $\bar{\C}_t$ be such that $\bar{\C}_t^{\top}\bar{\C}_t = \E_{(g,\C_t)\sim\mD_t}[\C_t^{\top}\C_t]$.

By Lemma \ref{lem:uniformMinimizers} and the assumption of the theorem, it holds that
\begin{eqnarray*}
\forall t\in[T]: \quad \dist(\x_t,\mX^*)^2 &\leq &\sigma^{-1}\Vert{\bar{\C}(\x_t-\x_t^*)}\Vert^2 \\
&\leq &(\alpha_2\sigma)^{-1}\Vert{\bar{\C}_t(\x_t-\x_t^*)}\Vert^2.
\end{eqnarray*}
Thus, since the step-size $\eta_t$ is monotonically non-increasing with $t$, we have that $\forall t\geq 2$:
{\small
\begin{eqnarray}\label{eq:ogd:master:3}
&&\left({\frac{1}{\eta_t} - \frac{1}{\eta_{t-1}}}\right)\dist(\x_t,\mX^*)^2 - \alpha_1\Vert{\bar{\C_t}(\x_t - \x_t^*)}\Vert^2 \nonumber \\
&& \leq \left({\left({\frac{1}{\eta_t} - \frac{1}{\eta_{t-1}}}\right)\frac{1}{\alpha_2\sigma} - \alpha_1}\right)\Vert{\bar{\C}_t(\x_t - \x_t^*)}\Vert^2.
\end{eqnarray}
}

Combining Eq. \eqref{eq:ogd:master:1}, \eqref{eq:ogd:master:2}, \eqref{eq:ogd:master:3}, and plugging-in our choice of step-size $\eta_t = \frac{1}{\alpha_1\alpha_2\sigma{}t}$, we conclude that
{\small
\begin{eqnarray*}
R_T \leq \frac{D^2}{2\eta_1}+ \frac{\bar{G}^2}{2}\sum_{t=1}^T\eta_t 
\leq \frac{\alpha_1\alpha_2\sigma{}D^2}{2} + \frac{\bar{G}^2}{2\alpha_1\alpha_2\sigma}(1+ \ln{T}).
\end{eqnarray*}}
\end{proof}

\begin{figure*}[h]
    \centering
          \begin{subfigure}{0.3\textwidth}
        \includegraphics[width=1.8in]{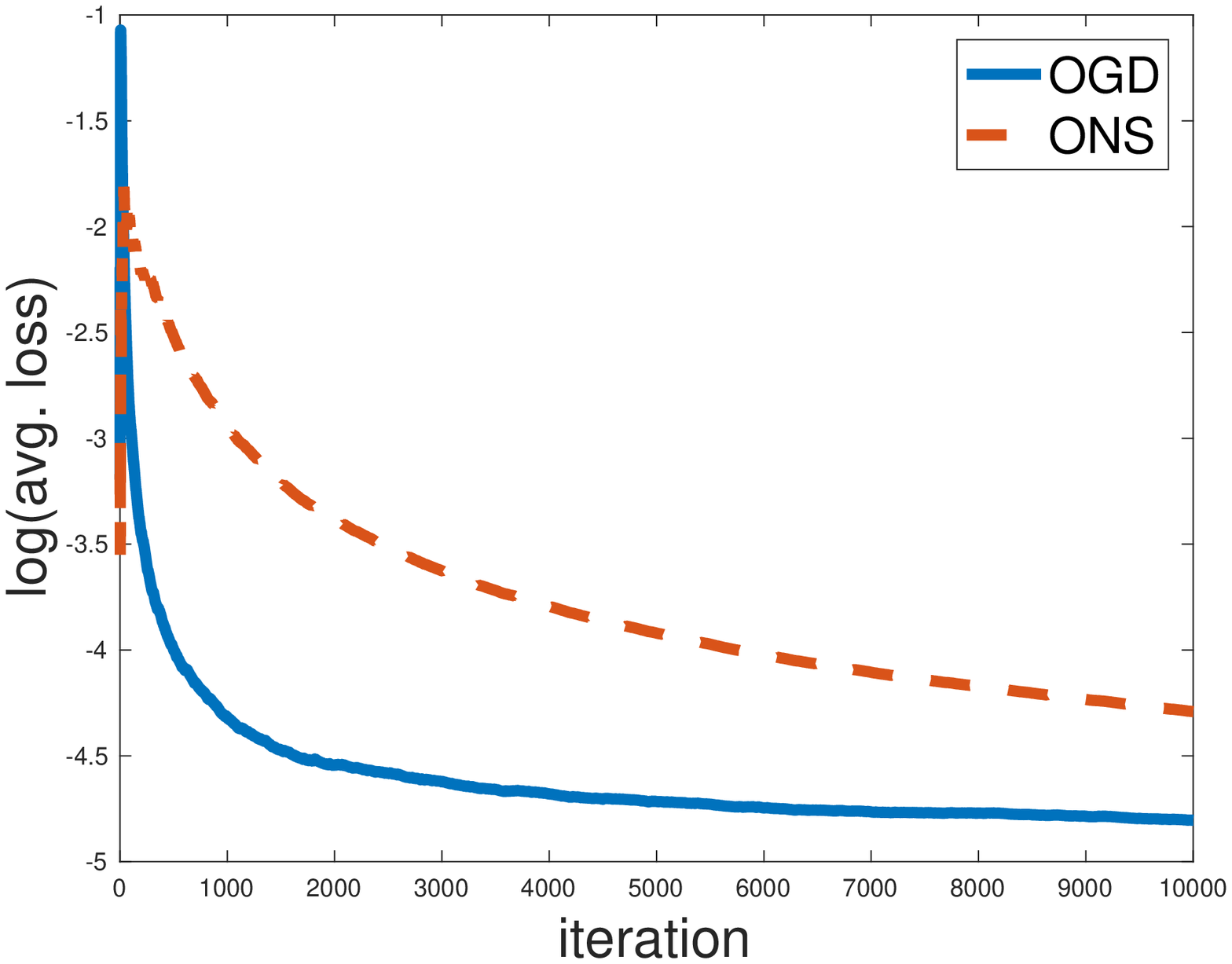}
        \caption{synthetic}
      \end{subfigure}    
    \begin{subfigure}{0.3\textwidth}
        \includegraphics[width=1.8in]{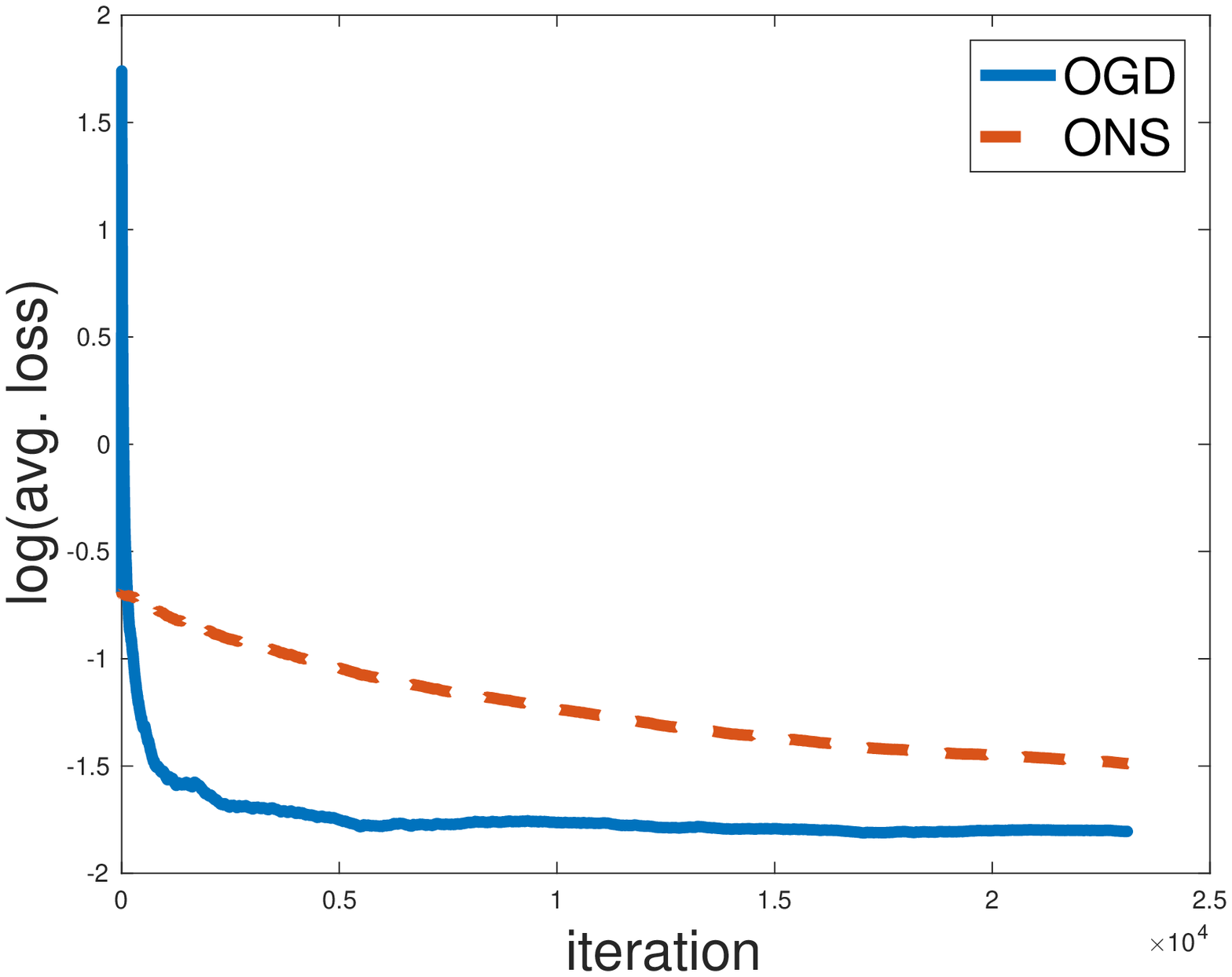}
        \caption{MNIST}
     \end{subfigure}
    \begin{subfigure}{0.3\textwidth}
        \includegraphics[width=1.8in]{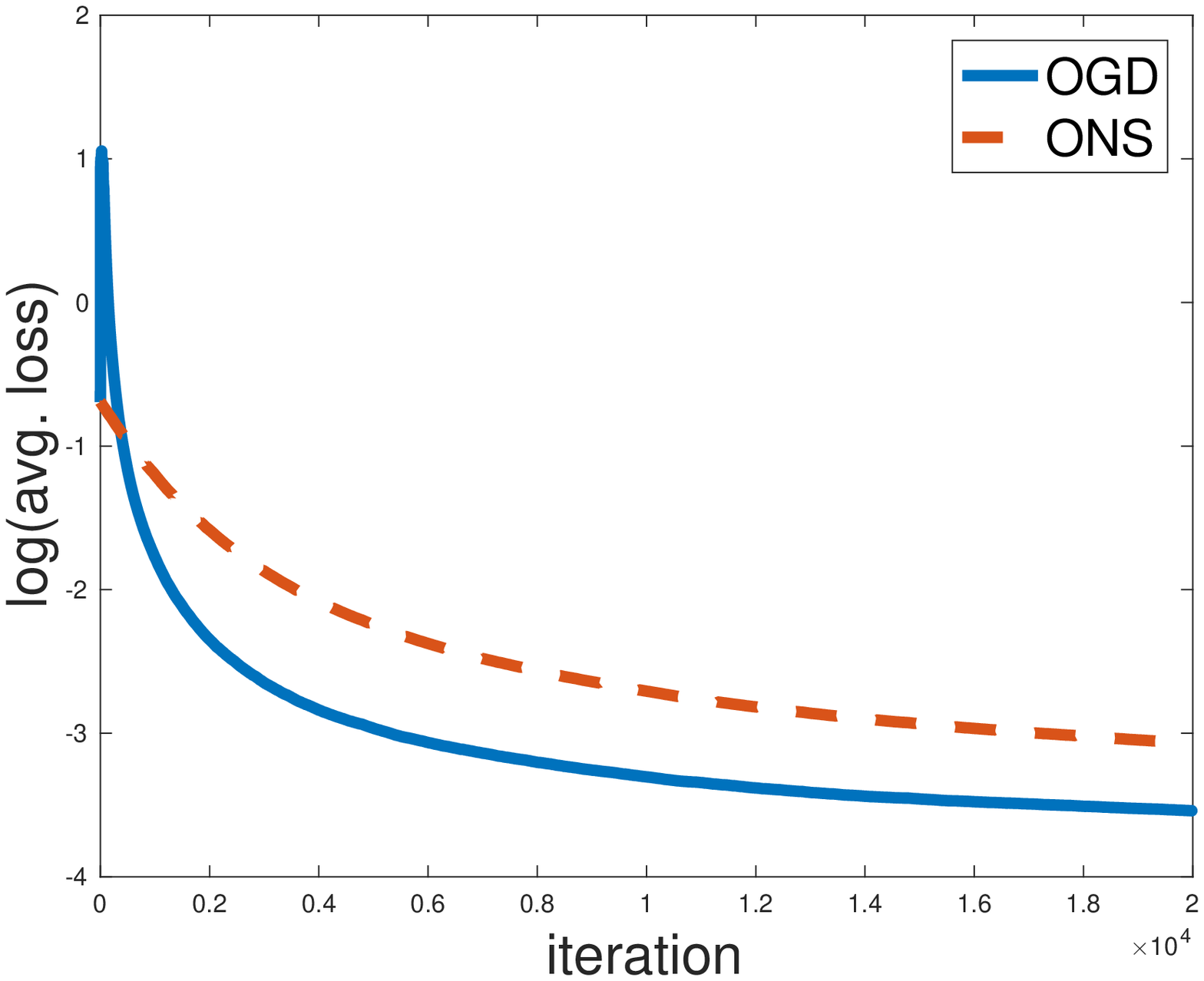}
        \caption{CIFAR10}
     \end{subfigure}
    \caption{Comparing the (log) average loss vs. number of iterations for OGD and ONS.}\label{fig:1}
\end{figure*}

\subsection{Applications of Theorem \ref{thm:ogd:master}}\label{subsec:app}
\paragraph{Deterministic data:}
In case  $g_1,\dots g_T:\reals^m\rightarrow\reals$ are arbitrary $\alpha_1$-strongly convex and differentiable functions over $\reals^m$, then a sufficient condition on the matrices $\C_1,\dots,\C_T$ for applying the result of Theorem \ref{thm:ogd:master} is that there exists a positive constant $\alpha_2$ such that
\begin{eqnarray}\label{eq:deterministicData}
\forall (i,j)\in[T]\times[T]: \qquad \C_i^{\top}\C_i \succeq \alpha_2\C_j^{\top}\C_j,
\end{eqnarray}
or in a different formulation: $\textrm{row-span}(\C_1) = \textrm{row-span}(\C_2) = \dots = \textrm{row-span}(\C_T)$.

A simple application of Theorem \ref{thm:ogd:master} upper-bounds the regret by $\frac{\alpha_1\alpha_2\sigma{}D^2}{2} + \frac{G^2}{2\alpha_1\alpha_2\sigma}(1 + \ln{T})$,
where $\sigma$ is the Hoffman constant of $\frac{1}{T}\sum_{t=1}^T\C_t^{\top}\C_t$ w.r.t. the polytope $\mP$.

\paragraph{Semi-adversarial data:}
 A way to circumvent the limitation of condition \eqref{eq:deterministicData}, is to consider slightly ``easier" data. In particular if we let $\{g_t(\cdot)\}_{t\in[T]}$ be as in the deterministic case, but we assume that the matrices $\C_1,\dots,\C_T$ are perturbed realizations of some underlying deterministic sequence $\tilde{\C}_1,\dots,\tilde{\C}_T$. That is, we let $\tilde{\C}_t$ be arbitrary, but the observed matrix $\C_t$ is a perturbed version given by $\C_t = \tilde{\C}_t + \N_t$, where $\N_t\sim\mD$, where $\mD$ is a fixed unknown distribution. Then, the condition in \eqref{eq:deterministicData} could be easily replaced by the requirement: $\forall t\in[T]$,  $\E_{\mD}[\N] = \mathbf{0}$ and $\E_{\mD}[\N^{\top}\N] \succeq \alpha_2\tilde{\C}_t^{\top}\tilde{\C}_t$, where $\alpha_2$, as before, is a positive constant.

An application of Theorem \ref{thm:ogd:master} upper-bounds the expected regret by $\frac{\alpha_1\alpha_2\sigma{}D^2}{2} + \frac{\bar{G}^2}{2\alpha_1\alpha_2\sigma}(1 + \ln{T})$,
where $\sigma$ is the Hoffman constant of the expected matrix $\frac{1}{T}\sum_{t=1}^T\C_t^{\top}\C_t + \E_{\mD}[\N^{\top}\N]$, w.r.t. the polytope $\mP$.
As mentioned in Section \ref{sec:examples}, such a setting underlies for instance the \textit{Universal Linear Filtering} problem studied in \cite{Moon2009, GarberH13}.

\paragraph{Shifting stochastic data:} Assuming $(g_t(\cdot),\C_t)$ is sampled out of a distribution $\mD_t$ (possibly changing from round to round), under assumptions on $g_t(\cdot)$ as above, a sufficient condition on the stochastic matrices $\C_t$ for applying the result of Theorem \ref{thm:ogd:master} is that there exists some $\alpha_2>0$ such that for all $t_1,t_2\in[T]$: $\E_{\C_1\sim\mD_{t_1}}[\C_1^{\top}\C_1] \succeq \alpha_2\E_{\C_2\sim\mD_{t_2}}[\C_2^{\top}\C_2]$, in which case we get the same bound as in the ``semi-adversarial" case with $\sigma$ being the Hoffman constant of $\frac{1}{T}\sum_{t=1}^T\E_{\C\sim\mD_t}[\C^{\top}\C]$ w.r.t. $\mP$.

\section{Experiments}

In this section we provide empirical evidence for the performance of Online Gradient Descent on curved, though not strongly convex, losses. Since the computational advantage of OGD over competing methods is clear, we focus on demonstrating convergence in terms of the average loss.
We consider the LASSO optimization problem, i.e., the loss function on each round $t$ is $f_t(\x) := \frac{1}{2}\Vert{\a_t^{\top}\x-b_t}\Vert^2$ and the feasible polytope is an $\ell_1$ ball. In all experiments we compare OGD with Online Newton Step (ONS).

\textbf{Synthetic data:} We compare OGD and ONS in an online stochastic setting.
We fix the dimension to $d=100$ and generate a random PSD matrix $\M\in\reals^{d\times d}$ with rank = $50$ and with decaying eigenvalues given by $\lambda_i = 10\cdot{}0.8^{i-1}$ for all $i\in[50]$. We set $\a_t = \v_t^{\top}\M$ for a random unit vector $\v_t$, and $b_t := \a_t^{\top}\w^*+0.1n_t$, where $\w^*$ is a fixed sparse vector chosen at random, and $n_t\sim\mathcal{N}(0, 1)$. The radius of the feasible $\ell_1$ ball is set to $r=10$.
Since determining the Hoffman constant is difficult in general, we heuristically set $\sigma$ according the eigenvalue of the covariance matrix $\A = \frac{1}{T}\sum_{t=1}^T\a_t\a_t^{\top}$ which corresponds to the numerical rank of $\A$. We use this choice in all of our experiments which seems to work well.
 and set the step-size accordingly to $\eta_t = \frac{1}{\sigma{}t}$ (note $g_t(\cdot)$ in our case is $1$-strongly convex). ONS is implemented as suggested in \cite{Hazan16}. For both methods we plot the (log) average loss vs. number of iterations (we use $T=10000$). 

\textbf{MNIST data:} Next we experiment with the MNIST handwritten digit recognition dataset \cite{Lecun1998}. Specifically, we use the training dataset, keeping only the data related to digits $3,5$. We  set $b_t$ by assigning value $1$ to instances corresponding to the digit $5$ and $-1$ to those corresponding to $3$. Finally, in order to increase the amount of data, we replicate the data and concatenate twice. We set the radius of the feasible $\ell_1$ ball to $r=5$.

\textbf{CIFAR10 data:} we use the CIFAR10 tiny image dataset \cite{cifar-10} which contains 50000 32x32 images in RGB format. We convert the images to grayscale and keep only the data related to the classes "automobile" and "truck", assigning the first the label $b_t = -1$ and the second the label $b_t = 1$. Here we also replicate the data twice and set $r=5$.

The results for all datasets are presented in Figure \ref{fig:1}. It is clearly observable that in all three cases OGD is comparable to ONS in terms of regret and even far better.



\section{Acknowledgments}
This research was supported by the ISRAEL SCIENCE FOUNDATION (grant No. 1108/18).

\bibliography{bib}
\bibliographystyle{plain}






\onecolumn
\pagebreak

\appendix

\section{Proof of Hoffman's Lemma}

For the convenience of the reader we first restate the lemma.
\begin{lemma}
Let $\mP:=\{\x\in\reals^d ~ | ~ \A\x \leq \b\}$  be a compact and convex polytope and let $\C\in\reals^{m\times d}$. Given a vector $\c\in\reals^m$, define the set $\mP(\C,\c) := \{\x\in\mP ~ | ~ \C\x=\c\}$. If $\mP(\C,\c) \neq \emptyset$, then there exists $\sigma > 0$ such that
\begin{eqnarray*}
\forall \x\in\mP: \qquad \dist(\x,\mP(\C,\c))^2\leq \sigma^{-1}\Vert{\C\x - \c}\Vert^2.
\end{eqnarray*}
Moreover, we have the bound $\sigma \geq \min_{\Q\in\mathcal{M}}\lambda_{\min}\left({\Q\Q^{\top}}\right)$, where $\mathcal{M}$ is the set of all $d\times d$ matrices whose rows are linearly independent rows of the matrix $\M := (\A^{\top},~ \C^{\top})^{\top}$, and $\lambda_{\min}(\cdot)$ denotes the smallest non-zero eigenvalue.
\end{lemma}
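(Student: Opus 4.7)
I will establish the bound by analyzing the Euclidean projection of an arbitrary $\x \in \mP$ onto $\mP(\C,\c)$ via KKT conditions, then reduce to a square invertible submatrix of $\M$ by a Carath\'eodory-style argument.

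\textbf{Step 1: Set up the projection.} Fix $\x \in \mP$ and let $\bar{\x} := \arg\min_{\y \in \mP(\C,\c)} \Vert \y - \x \Vert^2$ (well-defined since $\mP(\C,\c)$ is nonempty, closed, convex). Writing the Lagrangian of this QP and invoking KKT, there exist multipliers $\lambda \in \reals_+^{\mathrm{rows}(\A)}$ and $\mu \in \reals^m$ such that
\begin{equation*}
\x - \bar{\x} \;=\; \A^\top \lambda + \C^\top \mu,
\end{equation*}
with complementary slackness $\lambda_i = 0$ whenever $\A_i \bar{\x} < \b_i$. Denote the set of $\A$-rows active at $\bar{\x}$ by $I$, so the representation uses only rows $\{\A_i\}_{i\in I}$ and all rows of $\C$.

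\textbf{Step 2: Reduce to a linearly independent spanning set.} By a standard linear-algebra reduction (the analogue of Carath\'eodory's theorem for conic-plus-linear combinations), I can assume the collection $\{\A_i^\top\}_{i\in J} \cup \{\C_j^\top\}_{j\in K}$ appearing with nonzero coefficients after reduction is linearly independent, with $J \subseteq I$ and $\lambda_i \geq 0$ for $i \in J$. Concretely, while a nontrivial linear dependence exists, perturb $(\lambda,\mu)$ along the dependence direction to drive some $\A$-coefficient down to $0$ (the $\C$-multipliers being unconstrained in sign causes no trouble). Since $\mP$ is compact, the recession cone $\{\z : \A\z \leq 0\}$ is $\{0\}$, so $\A$ has rank $d$; therefore I can extend the independent set by additional rows of $\A$ (with coefficient $0$) to obtain exactly $d$ linearly independent rows of $\M$. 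Assemble these into a matrix $\Q \in \mathcal{M}$ and a coefficient vector $\nu = (\tilde\lambda, \tilde\mu)$ with $\tilde\lambda \geq 0$ so that $\x - \bar{\x} = \Q^\top \nu$.

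\textbf{Step 3: Bound the norm.} Compute
\begin{equation*}
\Vert \x - \bar{\x} \Vert^2 \;=\; \nu^\top \Q (\x - \bar{\x}) \;=\; \tilde\lambda^\top \A_{\cdot}(\x - \bar{\x}) \;+\; \tilde\mu^\top \C_{\cdot}(\x - \bar{\x}).
\end{equation*}
For each $\A$-row contributing: if it was an active padding row, $\tilde\lambda_i = 0$; otherwise $\A_i \bar{\x} = \b_i$ and $\A_i \x \leq \b_i$ (since $\x\in\mP$), so $\A_i(\x-\bar{\x}) \leq 0$, and combined with $\tilde\lambda_i \geq 0$, this term is $\leq 0$. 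For the $\C$-rows, $\C\bar{\x} = \c$, so the contribution is $\tilde\mu^\top(\C_{\cdot}\x - \c_{\cdot}) \leq \Vert \tilde\mu \Vert \cdot \Vert \C\x - \c \Vert$. Since $\nu = \Q^{-\top}(\x - \bar{\x})$ and $\Q$ is invertible,
\begin{equation*}
\Vert \tilde\mu \Vert \;\leq\; \Vert \nu \Vert \;\leq\; \lambda_{\min}(\Q\Q^\top)^{-1/2} \Vert \x - \bar{\x} \Vert.
\end{equation*}
Combining, $\Vert \x - \bar{\x} \Vert \leq \lambda_{\min}(\Q\Q^\top)^{-1/2} \Vert \C\x - \c \Vert$, whence $\dist(\x,\mP(\C,\c))^2 \leq \sigma^{-1} \Vert \C\x - \c \Vert^2$ with $\sigma = \min_{\Q\in\mathcal{M}} \lambda_{\min}(\Q\Q^\top)$.

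\textbf{Main obstacle.} The delicate point is Step 2: justifying the reduction to a linearly independent subset while keeping the $\A$-multipliers nonnegative, and then padding up to a full $d\times d$ invertible submatrix of $\M$ in a way that makes the resulting $\Q$ belong to $\mathcal{M}$ without spoiling the sign structure or the inequality $\A_i(\x-\bar{\x}) \leq 0$ for the contributing $\A$-rows. The padding argument relies crucially on $\mP$ being bounded, which forces $\rank(\A) = d$; otherwise $\mathcal{M}$ might not contain a matrix extending our chosen linearly independent vectors.
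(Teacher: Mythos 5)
Your proof is correct, but it takes a genuinely different route from the paper's. You work on the primal side: you characterize the projection residual $\x-\bar{\x}$ via the KKT conditions of the projection QP, prune the multiplier representation down to a linearly independent set of active rows by a conic Carath\'eodory argument, pad to an invertible $\Q\in\mathcal{M}$ using $\rank(\A)=d$ (which compactness indeed forces), and then exploit the sign of the active-$\A$ terms together with Cauchy--Schwarz on the $\C$ terms and $\Vert{\Q^{-\top}}\Vert = \lambda_{\min}(\Q\Q^{\top})^{-1/2}$. The paper instead works on the dual side: it writes $\dist(\x,\mP(\C,\c))$ as a min-max, applies Sion's minimax theorem and LP duality to obtain $\max\{\q^{\top}(\tilde{\A}\x-\tilde{\b}) : \Vert{\tilde{\A}^{\top}\q}\Vert\le 1,\ \q\ge 0\}$, restricts to a vertex of the dual polytope (whose support automatically corresponds to linearly independent rows of $\tilde{\A}$), and bounds that vertex's norm by the same $\sigma_{\min}$ quantity. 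The two arguments are morally dual to one another --- your KKT multipliers play the role of the paper's dual vertex $\hat{\q}^*$ --- and they yield the identical constant. Your version is more elementary and self-contained (no minimax theorem, no LP duality), at the price of having to carry out the independence reduction by hand; in the paper that reduction comes for free from the vertex structure of linear programming. One small point to tighten in your Step 2: when the linear dependence involves only $\C$-rows, you zero out a $\C$-coefficient rather than an $\A$-coefficient (harmless since $\mu$ is sign-unconstrained); your parenthetical gestures at this but the case split should be stated explicitly. Also note that since $\Q$ is invertible, $\Q\Q^{\top}$ is positive definite, so the paper's convention that $\lambda_{\min}$ denotes the smallest \emph{non-zero} eigenvalue coincides with the ordinary smallest eigenvalue you use.
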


\begin{proof}
The proof is based on the proof in \cite{Guler2010} (pages 299-301), though some details are different.

Let us  define the matrix and vector
\begin{eqnarray*}
\tilde{\A} := \left( \begin{array}{c}
~\A \\
~\C \\
-\C \end{array} \right), \qquad
\tilde{\b} := \left( \begin{array}{c}
~\b \\
~\c \\
-\c \end{array} \right).
\end{eqnarray*}
Note that the following equivalence holds trivially
\begin{eqnarray*}
\mP(\C,\c) := \{\x\in\reals^d ~ | ~ \tilde{\A}\x \leq \tilde{\b}\}.
\end{eqnarray*}

We can now write,

\begin{eqnarray}\label{eq:hoff:1}
\dist(\x,\mP(\C,\c)) &=& \min_{\y\in\mP(\C,\c)}\Vert{\x-\y}\Vert \nonumber \\
&=& \min_{\tilde{\A}\y\leq\tilde{\b}}\max_{\Vert{\u}\Vert \leq 1}(\x-\y)^{\top}\u \nonumber\\
&\underset{(a)}{=}& \max_{\Vert{\u}\Vert \leq 1}\min_{\tilde{\A}\y\leq\tilde{\b}}(\x-\y)^{\top}\u \nonumber\\
&=& \max_{\Vert{\u}\Vert \leq 1}\min_{\tilde{\A}\w\geq\hat{\b}}\w^{\top}\u \qquad \{\w:=\x-\y, \quad \hat{\b} := \tilde{\A}\x - \tilde{\b}\} \nonumber\\
&\underset{(b)}{=}& \max_{\Vert{\u}\Vert \leq 1}\max\{\q^{\top}\hat{\b} ~ | ~ \tilde{\A}^{\top}\q = \u,  \q \geq 0\} \nonumber\\
&=& \max\{\q^{\top}\hat{\b} ~ | ~ \Vert{\tilde{\A}^{\top}\q}\Vert \leq 1,  \q \geq 0\}.
\end{eqnarray}
where (a) follows from minimax duality (which holds since the polytope $\tilde{\A}\y\leq \tilde{\b}$, by construction, is compact), see \cite{Sion58}, and (b) follows from linear programming duality.

Let $\q^*$ be an optimal solution to the RHS of Eq. \eqref{eq:hoff:1}, and let $\u^*$ be a corresponding vector such that $\tilde{\A}^{\top}\q^* = \u^*$.

We can now write
\begin{eqnarray}\label{eq:hoff:2}
\dist(\x,\mP(\C,\c)) &=& \max\{\q^{\top}\left({\tilde{\A}\x - \tilde{\b}}\right) ~ | ~ \tilde{\A}^{\top}\q = \u^*,  \q \geq 0\}.
\end{eqnarray}

Note that the RHS of Eq. \eqref{eq:hoff:2} is again a linear program with the feasible polytope being $\hat{\mP}:=\{\q ~ | ~ \tilde{\A}^{\top}\q = \u^*, ~ \q \geq 0\}$. Hence, an optimal solution to RHS of \eqref{eq:hoff:2} $\hat{\q}^*$ is without loss of generality a vertex of $\hat{\mP}$. Hence, the non-zero entries of $\hat{\q}^*$ correspond to a set of linearly independent columns of $\tilde{\A}^{\top}$. That is, there exists a matrix $\M^*$, whose rows are taken from $\tilde{\A}$ and are linearly independent (i.e., $\M^*\M^{*\top}$ is positive definite), such that 
\begin{eqnarray*}
1 \geq \Vert{\u^*}\Vert= \Vert{\tilde{\A}^{\top}\hat{\q}^*}\Vert  \geq \Vert{\hat{\q}^*}\Vert\cdot\sigma_{\min}(\M^*) = \Vert{\hat{\q}^*}\Vert\cdot\sqrt{\lambda_{\min}(\M^*\M^{*\top})}.
\end{eqnarray*}
Thus, we have that
\begin{eqnarray}\label{eq:hoff:3}
\Vert{\hat{\q}^*}\Vert \leq \frac{1}{\sqrt{\lambda_{\min}(\M^{*}\M^{*\top})}}.
\end{eqnarray}

Note that since by definition $\hat{\q}^* \geq 0$, using the Cauchy-Schwarz inequality we have that 
\begin{eqnarray}\label{eq:hoff:4}
\textrm{RHS of \eqref{eq:hoff:2} } \leq \Vert{\hat{\q}^*}\Vert\cdot \Vert{(\tilde{\A}\x-\tilde{\b})_+}\Vert \leq \Vert{\hat{\q}^*}\Vert\cdot \Vert{\C\x-\c}\Vert,
\end{eqnarray}
were for any vector $\w$ we let $\w_+$ denote the vector that corresponds to the subset of non-negative entries of $\w$. Note that the last inequality holds since $\x$ is feasible with respect to the polytope $\mP$ (hence $\A\x - \b \leq 0$), and since for each row $i$ of $\C$, it trivially holds that either $\C_i\x - \c_i \leq 0$, or $-\C_i\x + \c_i < 0$. 

Combining Eq. \eqref{eq:hoff:3}, \eqref{eq:hoff:4}, yields the lemma.
\end{proof}

\end{document}